\newcommand\blfootnote[1]{%
  \begingroup
  \renewcommand\thefootnote{}\footnote{#1}%
  \addtocounter{footnote}{-1}%
  \endgroup
}
\newtheorem*{thm*}{Theorem}
\newtheorem{thm}{Theorem}[section]
\title{Legged Locomotion in Challenging Terrains\\using Egocentric Vision}
\author{
Ananye Agarwal\textsuperscript{$\ast$~1} 
Ashish Kumar\textsuperscript{$\ast$~2}, \; 
Jitendra Malik\textsuperscript{\textdagger 2}, \; 
Deepak Pathak\textsuperscript{\textdagger 1}\\
\textsuperscript{1}Carnegie Mellon University, \; \textsuperscript{2}UC Berkeley}
\begin{document}

\blfootnote{
\vspace{0.3em}
\textsuperscript{$\ast$}Equal Contribution.
\textsuperscript{\textdagger}Equal Advising.
\vspace{-1.5em}
}

\makeatletter
\let\@oldmaketitle\@maketitle%
\renewcommand{\@maketitle}{\@oldmaketitle%
\centering
\vspace{-0.2in}
\includegraphics[width=\textwidth]{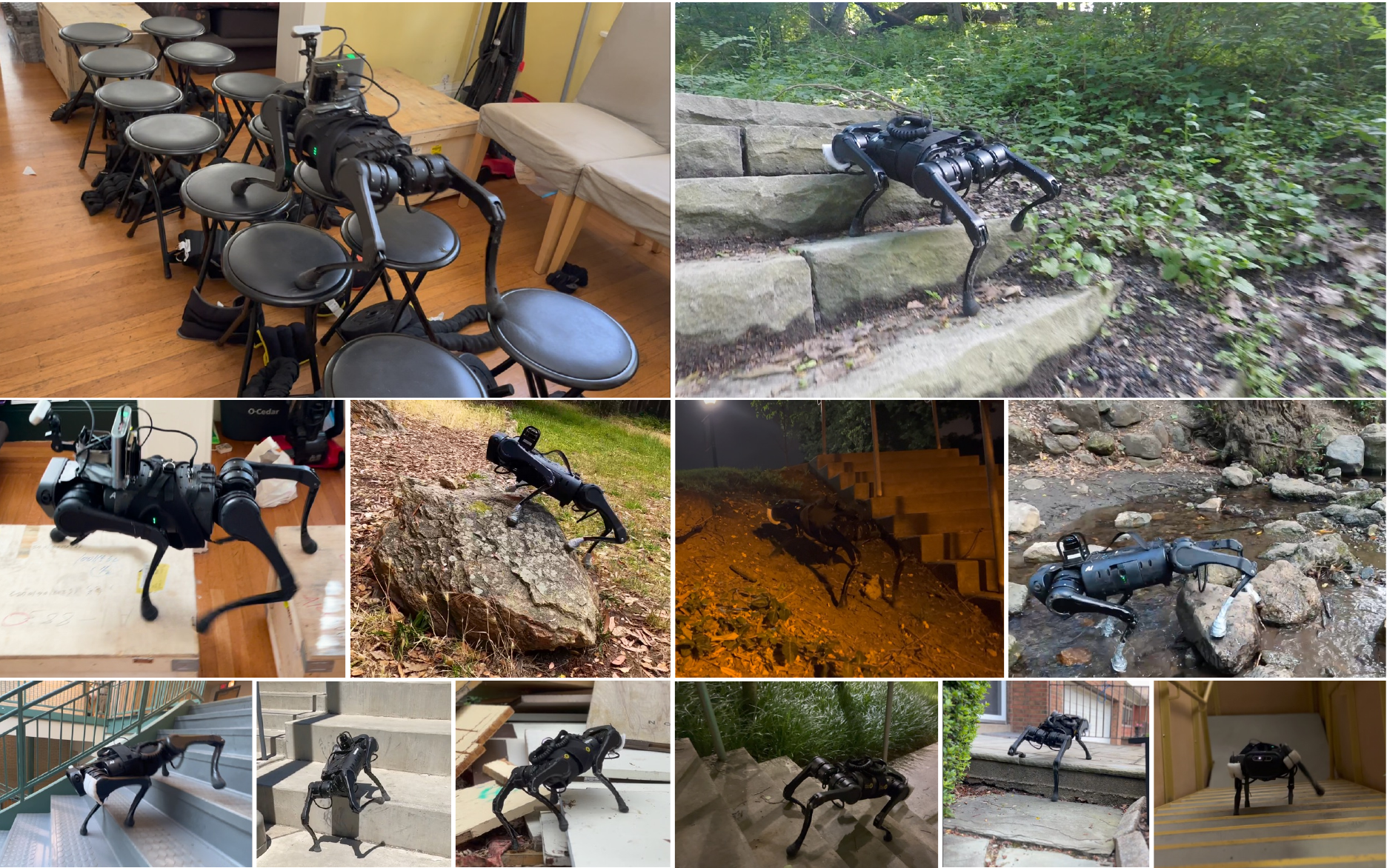}
\captionof{figure}{\small Our robot can traverse a variety of challenging terrain in indoor and outdoor environments, urban and natural settings during day and night using a single front-facing depth camera. The robot can traverse curbs, stairs and moderately rocky terrain. Despite being much smaller than other commonly used legged robots, it is able to climb stairs and curbs of a similar height. Videos at~\url{https://vision-locomotion.github.io}}
\label{fig:teaser}
\vspace{-0.1in}
\bigskip}
\makeatother
\maketitle

\begin{abstract}
Animals are capable of precise and agile locomotion using vision. Replicating this ability has been a long-standing goal in robotics. The traditional approach has been to decompose this problem into elevation mapping and foothold planning phases. The elevation mapping, however, is susceptible to failure and large noise artifacts, requires specialized hardware, and is biologically implausible. In this paper, we present the first end-to-end locomotion system capable of traversing stairs, curbs, stepping stones, and gaps. We show this result on a medium-sized quadruped robot using a single front-facing depth camera. The small size of the robot necessitates discovering specialized gait patterns not seen elsewhere. The egocentric camera requires the policy to remember past information to estimate the terrain under its hind feet. We train our policy in simulation. Training has two phases - first, we train a policy using reinforcement learning with a cheap-to-compute variant of depth image and then in phase 2 distill it into the final policy that uses depth using supervised learning. The resulting policy transfers to the real world and is able to run in real-time on the limited compute of the robot. It can traverse a large variety of terrain while being robust to perturbations like pushes, slippery surfaces, and rocky terrain. Videos are at~\url{https://vision-locomotion.github.io}. 
\end{abstract}

\section{Introduction}
\label{sec:intro}
\vspace{-0.1in}
Of what use is vision during locomotion? Clearly, there is a role of vision in navigation -- using maps or landmarks to find a  trajectory in the 2D plane to a distant goal while avoiding obstacles. But given a local direction in which to move, it turns out that both humans \cite{loomis1992visual} and robots \cite{lee2020learning,rma} can do remarkably well at blind walking. Where vision becomes necessary is for locomotion in challenging terrains. In an urban environment, staircases are the most obvious example. In the outdoors, we can deal with rugged terrain such as scrambling over rocks, or stepping from stone to stone to cross a stream of water. There is a fair amount of scientific work studying this human capability and showing tight coupling of motor control with vision~\cite{matthis2014visual,patla1997understanding,mohagheghi2004effects}. In this paper, we will develop this capability for a quadrupedal walking robot equipped with egocentric depth vision. We use a reinforcement learning approach trained in simulation, which we are directly able to transfer to the real world. Figure~\ref{fig:teaser} and the accompanying videos shows some examples of our robot walking guided by vision.

Humans receive an egocentric stream of vision which is used to  control  feet placement, typically without conscious planning.  As children we acquire it through trial and error \cite{adolph2012you} but for adults it is an automatized skill. Its unconscious execution should not take away from its remarkable sophistication. The footsteps being placed now are based on information collected some time ago. Typically,  we don't look at the ground underneath our feet, rather at the upcoming piece of ground in front of us a few steps away\cite{loomis1992visual,matthis2014visual,patla1997understanding,mohagheghi2004effects}. A short term memory is being created which persists long enough to guide foot placement when we are actually over that piece of ground. Finally, note that we learn to walk through bouts of steps, not by executing pre-programmed gaits \cite{adolph2012you}.

We take these observations about human walking as design principles for the visually-based walking controller for an A1 robot. The walking policy is trained by reinforcement learning with a recurrent neural network being used as a short term memory of recent egocentric views, proprioceptive states, and action history. Such a policy can maintain memory of recent visual information to retrieve characteristics of the terrain under the robot or below the rear feet, which might no longer be directly visible in the egocentric view. 

 In contrast, prior locomotion techniques rely on the metric elevation map of the terrain around and under the robot \citep{miki2022learning, kim2020vision, jenelten2020perceptive} to plan foot steps and joint angles. The elevation map is constructed by fusing information from multiple depth images (collected over time). This fusion of depth images into a single elevation map requires the relative pose between cameras at different times. Hence, tracking is required in the real world to obtain this relative pose using visual or inertial odometry. This is challenging because of noise introduced in sensing and odometry, and hence, previous methods add different kinds of structured noise at training time to account for the noise due to pose estimation drift \citep{fankhauser2014robot, miki2022elevation, 9293017}. The large amount of noise hinders the ability of such systems to perform reliably on gaps and stepping stones. We use vision as a first class citizen and show all the uneven terrain capabilities along with a high success rate on crossing gaps and stepping stones. 

\begin{figure}[h]
\vspace{-0.1in}
    \centering
    \begin{subfigure}[b]{0.4\textwidth}
        \centering
        \includegraphics[width=\linewidth]{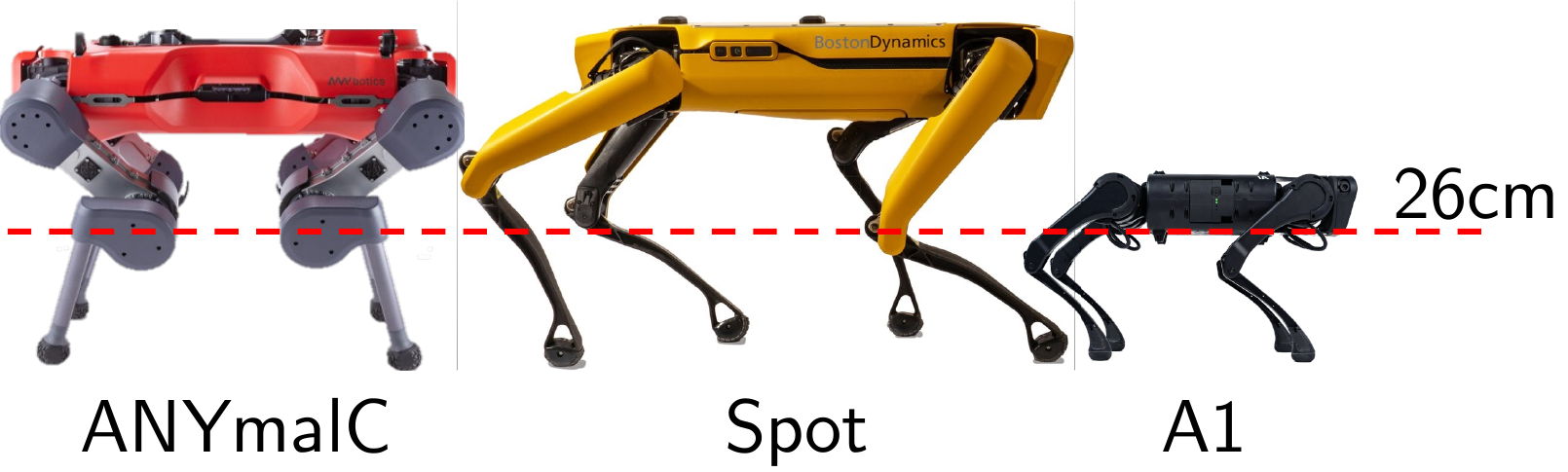}
        \caption{\small Robot size comparison}
        \label{fig:size_comparison}
    \end{subfigure}
    \quad
    \begin{subfigure}[b]{0.24\textwidth}
        \centering
        \includegraphics[width=\linewidth]{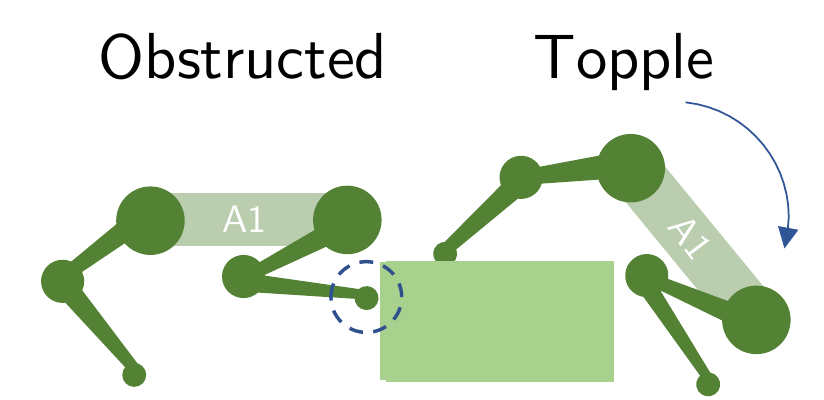}
        \caption{\small Challenges due to size}
        \label{fig:size_on_stairs}
    \end{subfigure}
    \quad
    \begin{subfigure}[b]{0.275\textwidth}
        \centering
        \includegraphics[width=\linewidth]{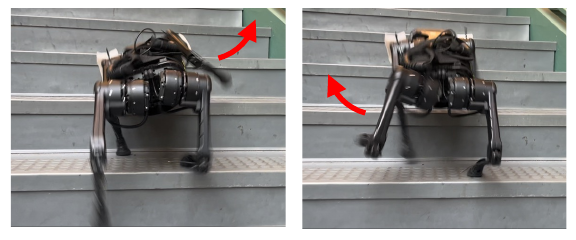}
        \caption{\small Emergent hip abduction}
        \label{fig:hip_abduction}
    \end{subfigure}
    \caption{\small A smaller robot (a) faces challenges in climbing stairs and curbs due to the stair obstructing its feet while going up and a tendency to topple over when coming down (b). Our robot deals with this by climbing using a large hip abduction that automatically emerges during training (c).}
    \vspace{-0.1in}
    \label{fig:robot-size}
\end{figure}

The design principle of not having pre-programmed gait priors turns out to be quite advantageous for our relatively small robot~\footnote{A1 standing height is $40\textrm{cm}$ as measured by us. Spot, ANYmalC both are $70\textrm{cm}$ tall reported \href{https://support.bostondynamics.com/s/article/Robot-specifications}{here} and \href{https://www.anybotics.com/anymal-autonomous-legged-robot/}{here}.} (fig.~\ref{fig:robot-size}). Predefined gait priors or reference motions fail to generalize to obstacles of even a reasonable height because of the relatively small size of the quadruped. The emergent behaviors for traversing complex terrains without any priors enable our robot with a hip joint height of 28cm to traverse the stairs of height upto 25cm, 89\% relative to its height, which is significantly higher than any existing methods which typically rely on gait priors. 

Since our robot is small and inexpensive, it has limited onboard compute and sensing. It uses a single front-facing D435 camera for exteroception. In contrast, AnymalC has four such cameras in addition to two dome lidars. Similarly, Spot has 5 depth cameras around its body. Our policy computes actions with a single feedforward pass and requires no tracking. This frees us from running optimization for MPC or localization which requires expensive hardware to run in real-time. 

Overall, this use of learning ``all the way'' and the tight coupling of egocentric vision with motor control are the distinguishing aspects of our approach.

\section{Method: Legged Locomotion from Egocentric Vision}
\label{sec:method}
Our goal is to learn a walking policy that maps proprioception and depth input to target joint angles at 50Hz.  Since depth rendering slows down the simulation by an order of magnitude, directly training this system using reinforcement learning (RL) would require billions of samples to converge making this intractable with current simulations. We therefore employ a two-phase training scheme. In phase 1, we use low resolution scandots located under the robot as a proxy for depth images. Scandots refer to a set of $(x, y)$ coordinates in the robot's frame of reference at which the height of the terrain is queried and passed as observation at each time step (fig.~\ref{fig:method}). These capture terrain geometry and are cheap to compute. In phase 2, we use depth and proprioception as input to an RNN to implicitly track the terrain under the robot and directly predict the target joint angles at 50Hz. This is supervised with actions from the phase 1 policy. Since supervised learning is orders of magnitude more sample efficient than RL, our proposed pipeline enables training the whole system on a single GPU in a few days. Once trained, our deployment policy does not construct metric elevation maps, which typically rely on metric localization, and instead directly predicts joint angles from depth and proprioception. 

One potential failure mode of this two-phase training is that the scandots might contain more information than what depth can infer. To get around this, we choose scandots and camera field-of-view such that phase 2 loss is low. We formally show that this guarantees that the phase 2 policy will have close to optimal performance in Thm~\ref{thm:mdp} below. 
 
\begin{thm}\label{thm:mdp}
$\mathcal{M} = \left(\mathcal{S}, \mathcal{A}, P, R, \gamma\right)$ be an MDP with state space $\mathcal{S}$, action space $\mathcal{A}$, transition function $P:\mathcal{S}\times\mathcal{A}\rightarrow\mathcal{S}$, reward function $R:\mathcal{A}\times\mathcal{S}\rightarrow\mathbb{R}$ and discount factor $\gamma$. Let $V^1(s)$ be the value function of the phase 1 policy that is trained to be close to optimal value function $V^*(s)$, i.e., $\left|V^\ast(s) - V^1(s)\right| < \epsilon$ $\forall s\in\mathcal{S}$, and $\pi^1(s)$ be the greedy phase 1 policy obtained from $V^1(s)$. Suppose the phase 2 policy operates in a different state space $\mathcal{S}'$ given by a mapping $f:\mathcal{S}\rightarrow\mathcal{S}'$ . If the phase 2 policy is close to phase 1 $\left|\pi^1(s) - \pi^2(f(s))\right| < \eta \ \forall \ s$ and $R,P$ are Lipschitz continuous, then the return of phase 2 policy is close to optimal everywhere, i.e., $\forall s$, $\left|V^\ast(s) - V^{\pi^2}(f(s))\right| < \frac{2\epsilon\gamma + \eta c}{1-\gamma}$ where $c \propto \sum_{s\in\mathcal{S}}V^\ast(s)$ is a large but bounded constant. (proof in sec.~\ref{sec:proof})
\end{thm}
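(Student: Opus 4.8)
The plan is to bound $\left|V^\ast(s)-V^{\pi^2}(f(s))\right|$ by splitting it, via the triangle inequality, into a term measuring how far the \emph{greedy} phase~1 policy $\pi^1$ is from optimal (this produces the $2\epsilon\gamma$ in the numerator) and a term measuring how the return degrades when $\pi^1$ is perturbed to the phase~2 policy (this produces the $\eta c$ term). Throughout, write $\tilde\pi^2 := \pi^2\circ f$ for the phase~2 policy regarded as a stationary policy on $\mathcal S$, so that $V^{\pi^2}(f(s))$ is read as $V^{\tilde\pi^2}(s)$, the discounted return of executing $\pi^2$ on the observations $f(\cdot)$ starting from $s$. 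Let $T$ and $T^{\pi}$ denote the Bellman optimality operator and the Bellman evaluation operator of a policy $\pi$; both are $\gamma$-contractions in $\|\cdot\|_\infty$ with fixed points $V^\ast$ and $V^{\pi}$ respectively.

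\textbf{Step 1 (the greedy policy $\pi^1$ is near-optimal).} This is the classical Singh--Yee / Bertsekas argument. Since $\pi^1$ is greedy with respect to $V^1$ we have $T^{\pi^1}V^1 = TV^1$, hence
\begin{align}
\|V^\ast - V^{\pi^1}\|_\infty
&\le \|TV^\ast - TV^1\|_\infty + \|T^{\pi^1}V^1 - T^{\pi^1}V^{\pi^1}\|_\infty \\
&\le \gamma\|V^\ast - V^1\|_\infty + \gamma\|V^1 - V^{\pi^1}\|_\infty
\le 2\gamma\epsilon + \gamma\|V^\ast - V^{\pi^1}\|_\infty ,
\end{align}
where the last step uses $\|V^1-V^\ast\|_\infty<\epsilon$ and one more triangle inequality. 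Rearranging gives $\|V^\ast - V^{\pi^1}\|_\infty \le \frac{2\gamma\epsilon}{1-\gamma}$.

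\textbf{Step 2 (perturbing $\pi^1$ to $\tilde\pi^2$ costs $O(\eta)$), then combine.} Here I would run a simulation-lemma-style telescoping on the deterministic evaluation identity $V^{\pi}(s)=R(\pi(s),s)+\gamma V^{\pi}(P(s,\pi(s)))$. Setting $\Delta(s):=V^{\pi^1}(s)-V^{\tilde\pi^2}(s)$ and inserting $\pm\,\gamma V^{\pi^1}(P(s,\tilde\pi^2(s)))$,
\begin{align}
\Delta(s) &= \big(R(\pi^1(s),s)-R(\tilde\pi^2(s),s)\big) \\
&\quad + \gamma\big(V^{\pi^1}(P(s,\pi^1(s)))-V^{\pi^1}(P(s,\tilde\pi^2(s)))\big) + \gamma\,\Delta\big(P(s,\tilde\pi^2(s))\big) .
\end{align}
The first bracket is at most $L_R\|\pi^1(s)-\tilde\pi^2(s)\|\le L_R\eta$ by Lipschitzness of $R$; the second is at most $\gamma L_V L_P\,\eta$ by Lipschitzness of $P$ together with a sensitivity bound $L_V$ for $V^{\pi^1}$. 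Taking $\sup_s$ and solving $\|\Delta\|_\infty\le(L_R+\gamma L_V L_P)\eta+\gamma\|\Delta\|_\infty$ gives $\|V^{\pi^1}-V^{\tilde\pi^2}\|_\infty\le\frac{c\eta}{1-\gamma}$ with $c:=L_R+\gamma L_V L_P$; crudely bounding $|V^{\pi^1}(s_1)-V^{\pi^1}(s_2)|\le 2\max_s V^\ast(s)\le 2\sum_{s\in\mathcal S}V^\ast(s)$ (finite $\mathcal S$, non-negative rewards) lets one take $c$ proportional to $\sum_{s}V^\ast(s)$ and finite. Finally, by the triangle inequality,
\begin{align}
\|V^\ast - V^{\tilde\pi^2}\|_\infty \le \|V^\ast - V^{\pi^1}\|_\infty + \|V^{\pi^1}-V^{\tilde\pi^2}\|_\infty \le \frac{2\gamma\epsilon}{1-\gamma}+\frac{c\eta}{1-\gamma}=\frac{2\epsilon\gamma+\eta c}{1-\gamma},
\end{align}
which is the claim.

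\textbf{Main obstacle.} The delicate part is Step~2: pinning down in what sense $P$ is ``Lipschitz'' (this presupposes a metric on $\mathcal S$ and an estimate such as $\|P(s,a)-P(s,a')\|\le L_P\|a-a'\|$) and controlling how much $V^{\pi^1}$ moves when its input is shifted. A clean Lipschitz constant $L_V=L_R/(1-\gamma L_P)$ is available only under the contraction condition $\gamma L_P<1$; absent that, one falls back on the crude boundedness estimate above, which is exactly what forces the weak, ``large but bounded'' constant $c\propto\sum_sV^\ast(s)$ (and implicitly a finite state space). Two minor points also need care: the hypothesis $\|\pi^1(s)-\pi^2(f(s))\|<\eta$ is really only needed along trajectories the policies actually visit, and the greedy policy $\pi^1$ should be fixed by a tie-breaking rule so that $T^{\pi^1}V^1=TV^1$ holds exactly. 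If $P$ were stochastic the same telescoping works with expectations, provided $P$ is Lipschitz in a Wasserstein or total-variation sense.
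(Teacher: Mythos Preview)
Your two-step decomposition via the triangle inequality---first bounding $\|V^\ast-V^{\pi^1}\|_\infty$ by the Singh--Yee contraction argument, then bounding $\|V^{\pi^1}-V^{\tilde\pi^2}\|_\infty$ by a simulation-lemma telescoping---is a different and more modular route than the paper's. The paper never passes through $V^{\pi^1}$: it fixes the maximizing state $s_0$ and bounds $V^\ast(s_0)-V^{\pi^S}(s_0)$ in a single recursion, using the greedy inequality $Q^T(s_0,a^\ast)\le Q^T(s_0,a^T)$ together with $|V^\ast-V^T|<\epsilon$ to produce the $2\epsilon\gamma$ term and Lipschitzness of $R,P$ to produce the $\eta$ term, all inside one inequality before invoking maximality of $s_0$.

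There is, however, a real gap in your Step~2. You bound $\gamma\bigl(V^{\pi^1}(P(s,\pi^1(s)))-V^{\pi^1}(P(s,\tilde\pi^2(s)))\bigr)$ by $\gamma L_V L_P\eta$ and then propose to take $L_V=2\sum_s V^\ast(s)$ via the ``crude'' estimate $|V^{\pi^1}(s_1)-V^{\pi^1}(s_2)|\le 2\max_s V^\ast(s)$. But that is an oscillation bound, not a Lipschitz constant: it does not scale with $|s_1-s_2|$, so it cannot be multiplied by $L_P\eta$. Used as written, the second bracket is only bounded by $2\gamma\sum_s V^\ast(s)$ with \emph{no} $\eta$ factor, and the final estimate no longer tends to $2\epsilon\gamma/(1-\gamma)$ as $\eta\to 0$, which is the whole point of the theorem. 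The paper sidesteps this by (despite the deterministic signature $P:\mathcal S\times\mathcal A\to\mathcal S$ in the statement) working with a transition kernel and the pointwise Lipschitz condition $|P(s'\mid s,a)-P(s'\mid s,a')|\le L_P|a-a'|$; then $\sum_{s'}\bigl[P(s'\mid s,a)-P(s'\mid s,a')\bigr]V(s')\le L_P\eta\sum_{s'}V(s')$ directly, and $\sum_s V^\ast(s)$ appears as a weight rather than as a Lipschitz constant of $V$. Your telescoping goes through unchanged under this stochastic reading (you allude to it in your last sentence); it is only the deterministic route with the crude $L_V$ that fails.
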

We instantiate our training scheme using two different architectures. The monolithic architecture is an RNN that maps from raw proprioception and vision data directly to joint angles. The RMA architecture follows~\cite{rma}, and contains an MLP base policy that takes $\boldsymbol{\gamma}_t$ (which encodes the local terrain geometry) along with the extrinsics vector $\mathbf{z}_t$ (which encodes environment parameters ~\cite{rma}), and proprioception $\mathbf{x}_t$ to predict the target joint angles. An estimate of $\boldsymbol{\gamma}_t$ is generated by an RNN that takes proprioception and vision as inputs. While the monolithic architecture is conceptually simpler, it implicitly tracks $\boldsymbol{\gamma}_t$ and $\mathbf{z}_t$ in its weights and is hard to disentagle. In contrast, the RMA architecture allows direct access to each input ($\boldsymbol{\gamma}_t$ or $\mathbf{z}_t$) through latent vectors. This allows the possibility of swapping sensors (like replacing depth by RGB) or using one stream to supervise the other while keeping the base motor policy fixed.  

\subsection{Phase 1: Reinforcement Learning from Scandots}
Given the scandots $\mathbf{m}_t$, proprioception $\mathbf{x}_t$, commanded linear and angular velocity $\mathbf{u}_t^\textrm{cmd} = \left(v_x^\textrm{cmd}, \omega_z^\textrm{cmd}\right)$ we learn a policy using PPO without gait priors and with reward functions that minimize energetics to walk on a variety of terrains. Proprioception consists of joint angles, joint velocities, angular velocity, roll and pitch measured by onboard sensors in addition to the last policy actions $\mathbf{a}_{t-1}$. Let $\mathbf{o}_t = (\mathbf{m}_t, \mathbf{x}_t, \mathbf{u}_t^\textrm{cmd})$ denote the observations. The RMA policy also takes privileged information $\mathbf{e}_t$ as input which includes center-of-mass of robot, ground friction, and motor strength.  

\vspace{-0.1in}
\paragraph{Monolithic}
The scandots $\mathbf{m}_t$ are first compressed to $\boldsymbol{\gamma}_t$ and then passed with the rest of the observations to a GRU that predicts the joint angles.
\begin{align}
    \boldsymbol{\gamma}_t &= \mathrm{MLP}\left(\mathbf{m}_t\right) \label{eq:e2e_mlp}\\
    \mathbf{a}_t &= \mathrm{GRU}_t\left(\mathbf{x}_t, \boldsymbol{\gamma}_t, \mathbf{u}_t^\textrm{cmd}\right) \label{eq:e2e_gru}
\end{align}
the subscript $t$ on the GRU indicates that it is stateful. 
\begin{figure*}[t]
     \centering
     \includegraphics[width=\linewidth]{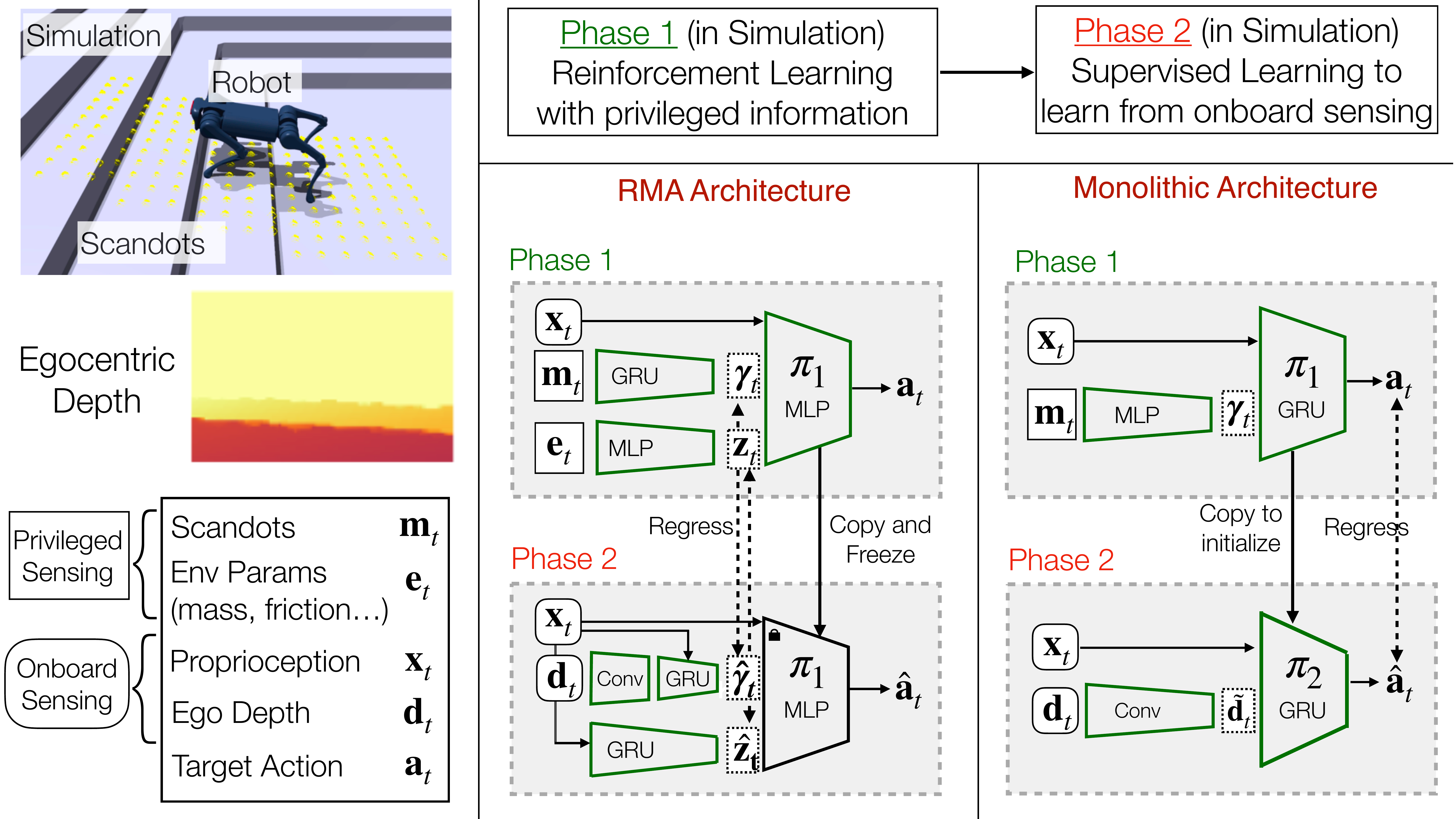}
    \caption{\small We train our locomotion policy in two phases to avoid rendering depth for too many samples. In phase 1, we use RL to train a policy $\pi^1$ that has access to scandots that are cheap to compute. In phase 2, we use $\pi^1$ to provide ground truth actions which another policy $\pi^2$ is trained to imitate. This student has access to depth map from the front camera. We consider two architectures (1) a monolithic one which is a GRU trained to output joint angles with raw observations as input (2) a decoupled architecture trained using RMA \citep{rma} that is trained to estimate vision and proprioception latents that condition a base feedforward walking policy.} 
    \label{fig:method}
    \vspace{-0.1in}
\end{figure*}

\paragraph{RMA} Instead of using a monolithic memory based architecture for the controller, we use an MLP as the controller, pushing the burden of maintaining memory and state on the various inputs to the MLP.  Concretely, we process the environment parameters ($\mathbf{e}_t$) with an MLP and the scandots ($\mathbf{m}_t$)  with a GRU to get $\mathbf{z}_t$ and $\boldsymbol{\gamma}_t$ respectively which are given as input to the base feedforward policy. 
\begin{align}
    \boldsymbol{\gamma}_t &= \mathrm{GRU}_t\left(\mathbf{m}_t\right) \label{eq:rma_gru_scandots}\\
    \mathbf{z}_t &= \mathrm{MLP}\left(\mathbf{e}_t\right) \label{eq:rma_mlp_prop} \\
    \mathbf{a}_t &= \mathrm{MLP}\left(\mathbf{x}_t, \boldsymbol{\gamma}_t, \mathbf{z}_t, \mathbf{u}_t^\textrm{cmd}\right) \label{eq:rma_mlp_base}
\end{align}
Both the phase 1 architectures are trained using PPO \citep{schulman2017proximal} with backpropagation through time \citep{werbos1990backpropagation} truncated at $24$ timesteps.
\paragraph{Rewards} We extend the reward functions proposed in \citep{rma, fu2021coupling} to simply penalizing the energy consumption along with additional penalties to prevent damage to hardware on complex terrain (sec.~\ref{sec:rewards}). Importantly, we do not impose any gait priors or predefined foot trajectories and let optimal gaits that are stable and natural to emerge for the task. 

\vspace{-0.1in}
\paragraph{Training environment} Similar to \citep{rudin2022learning} we generate different sets of terrain (fig.~\ref{fig:terrain}) of varying difficulty level. Following ~\citep{rma}, we generate fractal variations over each of the terrains to get robust walking behaviour. At training time, the environments are arranged in a $6\times10$ matrix with each row having terrain of the same type and difficulty increasing from left to right. We train with a curriculum over terrain \citep{rudin2022learning} where robots are first initialized on easy terrain and promoted to harder terrain if they traverse more than half its length. They are demoted to easier terrain if they fail to travel at least half the commanded distance $v_x^\textrm{cmd}T$ where $T$ is maximum episode length. We randomize parameters of the simulation (tab.~\ref{tab:dr}) and add small i.i.d. gaussian noise to observations for robustness (tab.~\ref{tab:obs_noise}).

\subsection{Phase 2: Supervised Learning} 
In phase 2, we use supervised learning to distil the phase 1 policy into an architecture that only has access to sensing available onboard: proprioception ($\mathbf{x}_t$) and depth $\mathbf{d}_t$. 
\paragraph{Monolithic} We create a copy of the recurrent base policy \ref{eq:e2e_gru}. We preprocess the depth map through a convnet before passing it to the base policy. 
\begin{align}
    \mathbf{\tilde{d}}_t &= \mathrm{ConvNet}\left(\mathbf{d}_t\right) \label{eq:e2e_convnet} \\
    \mathbf{\hat{a}}_t &= \mathrm{GRU}_t(\mathbf{x}_t, \mathbf{\tilde{d}}_t, \mathbf{a}_t^\textrm{cmd}) 
\end{align}
We train with DAgger~\citep{ross2011reduction} with truncated backpropagation through time (BPTT) to minimize mean squared error between predicted and ground truth actions $\|\mathbf{\hat{a}}_t - \mathbf{a}_t\|^2$. In particular, we unroll the student inside the simulator for $N = 24$ timesteps and then label each of the states encountered with the ground truth action $\mathbf{a}_t$ from phase 1. 

\paragraph{RMA} Instead of retraining the whole controller, we only train estimators of $\boldsymbol{\gamma}_t$ and $\mathbf{z}_t$, and use the same base policy trained in phase 1 (eqn.~\ref{eq:rma_mlp_base}). 
The latent $\hat{\boldsymbol{\gamma}}$, which encodes terrain geometry, is estimated from history of depth and proprioception using a GRU. Since the camera looks in front of the robot, proprioception combined with depth enables the GRU to implicitly track and estimate the terrain under the robot. Similar to \citep{rma}, history of proprioception is used to estimate extrinsics $\mathbf{\hat{z}}$. 
\begin{align}
    \mathbf{\tilde{d}}_t &= \mathrm{ConvNet}\left(\mathbf{d}_t\right) \label{eq:rma_convnet2} \\
    \boldsymbol{\hat{\gamma}}_t &= \mathrm{GRU}_t\left(\mathbf{x}_t, \mathbf{u}^\textrm{cmd}_t, \mathbf{\tilde{d}}_t\right) \label{eq:rma_gru_depth} \\
    \mathbf{\hat{z}}_t &= \mathrm{GRU}_t\left(\mathbf{x}_t, \mathbf{u}^\textrm{cmd}_t\right) \label{eq:rma_gru2_prop}\\
    \mathbf{a}_t &= \mathrm{MLP}\left(\mathbf{x}_t, \mathbf{u}^\textrm{cmd}_t, \boldsymbol{\hat{\gamma}}_t,  \mathbf{\hat{z}}_t\right) \label{eq:rma_mlp2}
\end{align}
As before, this is trained using DAgger with BPTT. The vision GRU \ref{eq:rma_gru_depth} and convnet \ref{eq:rma_convnet2} are jointly trained to minimize $\|\boldsymbol{\hat{\gamma}}_t - \boldsymbol{\gamma}_t\|^2$ while the proprioception GRU \ref{eq:rma_gru2_prop} minimizes $\|\mathbf{\hat{z}}_t - \mathbf{z}_t\|^2$. 

\paragraph{Deployment} The student can be deployed as-is on the hardware using only the available onboard compute. It is able to handle camera failures and the asynchronous nature of depth due to the randomizations we apply during phase 1. It is robust to pushes, slippery surfaces and large rocky surfaces and can climb stairs, curbs, and cross gaps and stepping stones.

\section{Experimental Setup}
We use the Unitree A1 robot pictured in Fig.~\ref{fig:robot-size}. The robot has 12 actuated joints. The robot has a front-facing Intel RealSense depth camera in its head. The onboard compute consists of the UPboard and a Jetson NX. The policy operates at $50\textrm{Hz}$ and sends joint position commands which are converted to torques by a low-level PD controller running at $400\textrm{Hz}$. Depth map is obtained from a Intel RealSense camera inside the head of the robot. The camera captures images every $100\textrm{ms}\pm20\textrm{ms}$ at a resolution of $480\times848$. We preprocess the image by cropping $200$ white pixels from the left, applying nearest neighbor hole-filling and downsampling to $58\times87$. This is passed through a backbone to obtain the compressed $\mathbf{\tilde{d}}_t$ \ref{eq:rma_convnet2}, \ref{eq:e2e_convnet} which is sent over a UDP socket to the base policy. This has a latency of $10\pm 10\textrm{ms}$ which we account for during phase 2.

We use the IsaacGym (IG) simulator with the legged\_gym library \citep{rudin2022learning} to train our walking policies. We construct a large terrain map with 100 sub-terrains arranged in a $20\times10$ grid. Each row has the same type of terrain arranged in increasing difficulty while different rows have different terrain.

\vspace{-0.1in}
\paragraph{Baselines} We compare against two baselines, each of which uses the same number of learning samples for both RL phase and supervised learning phase. 

\begin{itemize}[noitemsep,leftmargin=1.3em,itemsep=0em,topsep=0em]
    \item \textbf{Blind policy} trained with the scandots observations $\mathbf{m}_t$ masked with zeros. This baseline must rely on proprioception to traverse terrain and helps quantify the benefit of vision for walking.  
    \item \textbf{Noisy} Methods which rely on elevation maps need to fuse multiple depth images captured over time to obtain a complete picture of terrain under and around the robot. This requires camera pose relative to the first depth input, which is typically estimated using vision or inertial odometry \citep{fankhauser2014robot, 9293017, miki2022elevation}. However, these pose estimates are typically noisy resulting in noisy elevation maps \citep{miki2022learning}. To handle this, downstream controllers trained on this typically add a large noise in the elevation maps during training. Similar to ~\citep{miki2022learning}, we train a teacher with ground truth, noiseless elevation maps in phase 1 and distill it to a student with large noise, with noise model from~\citep{miki2022learning}, added to the elevation map. We simulate a latency of 40ms in both the phases of training to match the hardware. This baseline helps in understanding the effect on performance when relying on pose estimates which introduce additional noise in the pipeline. 
\end{itemize}

\begin{table}[]
    \centering
    \begin{tabular}{ccccccccc}
        \toprule 
        \multirow{2}{*}{Terrain} & \multicolumn{4}{c}{Average $x$-Displacement ($\uparrow$)} & \multicolumn{4}{c}{Mean Time to Fall (s)} \\
        \cmidrule(lr){2-5} 
        \cmidrule(lr){6-9}
         & RMA & MLith & Noisy & Blind & RMA & MLith & Noisy & Blind \\
        \midrule
         Slopes & 43.98 & 44.09 & 36.14 & 34.72 & 88.99 & 85.68 & 70.25 & 67.07 \\
         Stepping Stones & 18.83 & 20.72 & 1.09 & 1.02 & 34.3 & 41.32 & 2.51 & 2.49 \\
         Stairs & 31.24 & 42.4 & 6.74 & 16.64 & 69.99 & 90.48 & 15.77 & 39.17 \\
         Discrete Obstacles & 40.13 & 28.64 & 29.08 & 32.41 & 85.17 & 57.53 & 59.3 & 66.33\\
         \midrule
         Total & 134.18 & 135.85 & 73.05 & 84.79 & 278.45 & 275.01 & 147.83 & 175.06\\
         \bottomrule
    \end{tabular}
    \caption{\small We measure the average displacement along the forward axis and mean time to fall for all methods on different terrains in simulation. For each method, we train a single policy for all terrains and use that for evaluation. We see that the monolithic (MLith) and RMA architectures of our method outperform the noisy and blind baselines by 60-90\% in terms of total mean time to fall and average displacement. Vision is not strictly necessary for traversing slopes and the baselines make significant progress on this terrain, however, MLith and RMA travel upto 25\% farther. The difference is more stark on stepping stones where blind and noisy baselines barely make any progress due to not being able to locate positions of the stones, while MLith and RMA travel for around 20m. Noisy and blind make some progress on stairs and discrete obstacles, but our methods travel upto 6.3 times farther.}
    \label{tab:sim_results}
\end{table}

\section{Results and Analysis}

\paragraph{Simulation Results}
We report mean time to fall and mean distance travelled before crashing for different terrain and baselines in Table~\ref{tab:sim_results}. For each method, we train a single policy for all terrains and use that for evaluation. Although the blind policy makes non trivial progress on stairs, discrete obstacles and slopes, it is significantly less efficient at traversing these terrains. On slopes our methods travel upto 27\% farther implying that the blind baseline crashes early. Similarly, on stairs and discrete obstacles the distance travelled by our methods is much greater (upto 90\%). On slopes and stepping stones the noisy and blind baselines get similar average distances and mean time to fall and both are worse than our policy. This trend is even more significant on the stepping stones terrain where all baselines barely make any progress while our methods travel upto 20m. The blind policy has no way of estimating the position of the stone and crashes as soon as it steps into the gap. For the noisy policy, the large amount of added noise makes it impossible for the student to reliably ascertain the location of the stones since it cannot rely on proprioception any more. We note that the blind baseline is better than the noisy one on stairs. This is because the blind baseline has learnt to use proprioception to figure out location of stairs. On the other hand, the noisy policy cannot learn to use proprioception since it is trained via supervised learning. However, the blind baseline bumps into stairs often is not very practical to run on the real robot. The noisy baseline works well in \citep{miki2022learning} possibly because of predefined foot motions which make the phase 2 learning easier. However, as noted in sec.~\ref{sec:intro}, predefined motions will not work for our small robot.

\paragraph{Real World Comparisons} We compare the performance of our methods to the blind baseline in the real world. In particular we have 4 testing setups as shows in fig.~\ref{fig:irl_results}: Upstairs, Downstairs, Gaps and Stepping stones. While we train a single phase 1 policy for all terrain, for running baselines, we obtain different phase 2 policies for stairs vs. stepping stones and gaps. Different phase 2 policies are obtained by changing the location of the camera. We use the in-built camera inside the robot for stairs and a mounted external camera for stepping stones and gaps. The in-built camera is less prone to damage but the stepping stones are gaps are not clearly visible since it is horizontal. This is done for convenience, but we also have a policy that traverses all terrain using the same mounted camera. 

We see that the blind baseline is incapable of walking upstairs beyond a few steps and fails to complete the staircase even once. Although existing methods have shown stairs for blind robots, we note that our robot is relatively smaller making it a more challenging task for a blind robot. On downstairs, we observe that the blind baseline achieves 100\% success, although it learns to fall on every step and stabilize leading to a very high impact gait which led to the detaching of the rear right hip of the robot during our experiments. We additionally show results in stepping stones and gaps, where the blind robot fails completely establishing the hardness of these setups and the necessity of vision to solve them. We show a 100\% success on all tasks except for stepping stone on which we achieve 94\% success, which is very high given the challenging setup.  

\begin{figure}
    \centering
    \includegraphics[width=\textwidth]{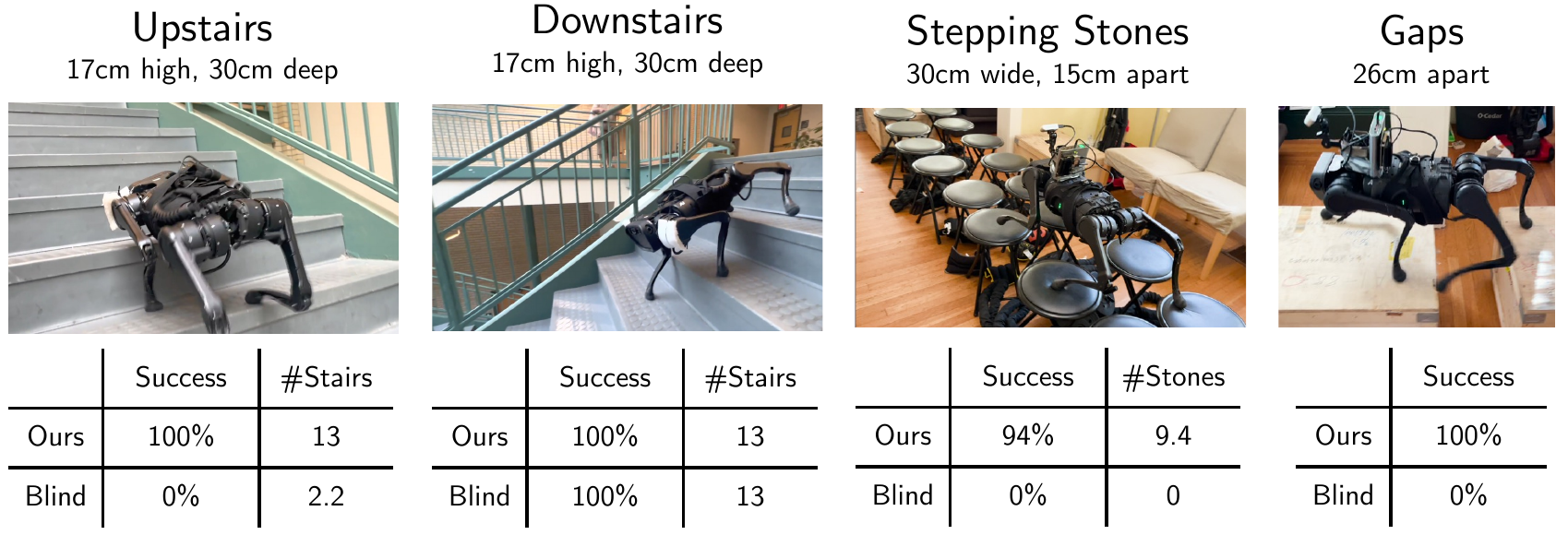}
    \caption{\small We show success rates and time-to-failure (TTF) for our method and the blind baseline on curbs, stairs, stepping stones and gaps. We use a separate policy for stairs which is distilled to front camera, and use a separate policy trained on stepping stones distilled to the top camera which we use for gaps and stepping stones. We observe that our method solves all the tasks perfectly except for the stepping stone task in which the robot achieves 94\% success. The blind baseline fails completely on gaps and stepping stones. For upstairs, it makes some progress, but fails to complete the entire staircase even once, which is expected given the small size of the robot. The blind policy completes the downstairs task 100\% success, although it learns a very high impact falling gait to solve the task. In our experiments, the robot dislocates its real right leg during the blind downstairs trials.}
    \label{fig:irl_results}
\end{figure}
\paragraph{Urban Environments}
We experiment on stairs, ramps and curbs (fig.~\ref{fig:teaser}). The robot was successfully able to go upstairs as well as downstairs for stairs of height upto 24cm in height and 28cm as the lowest width. Since the robot has to remember terrain under its body from visual history, it sometimes misses a step, but shows impressive recovery behaviour and continues climbing or descending. The robot is able to climb curbs and obstacles as high as $26\textrm{cm}$ which is almost as high as the robot~\ref{fig:robot-size}. This requires an emergent hip abduction movement because the small size of the robot doesn't leave any space between the body and stair for the leg to step up. This behavior emerges because of our tabula rasa approach to learning gaits without reliance on priors or datasets of natural motion. 
\paragraph{Gaps and Stepping Stones}
We construct an obstacle course consisting of gaps and stepping stones out of tables and stools (fig.~\ref{fig:irl_results}). For this set of experiments we use a policy trained on stepping stones on gaps, and distilled onto the top camera instead of the front camera. The robot achieves a 100\% success rate on gaps of upto 26cm from egocentric depth and 94\% on difficult stepping stones. The stepping stones experiment shows that our visual policy can learn safe foothold placement behavior even without an explicit elevation map or foothold optimization objectives. The blind baseline achieves zero success rate on both tasks and falls as soon as any gap is encountered.  

\paragraph{Natural Environments}
We also deploy our policy on outdoor hikes and rocky terrains next to river beds (fig.~\ref{fig:teaser}). We see that the robot is able to successfully traverse rugged stairs covered with dirt, small pebbles and some large rocks. It also avoids stumbling over large tree roots on the hiking trail. On the beach, we see that the robot is able to successfully navigate the terrain despite several slips and unstable footholds given the nature of the terrain. We see that the robot sometimes gets stuck in the crevices and in some cases shows impressive recovery behavior as well.

\section{Related Work}
\label{sec:relatedwork}

\vspace{-0.1in}
\paragraph{Legged locomotion} 
Legged locomotion an important problem which has been studied for decades. Several classical works use model based techniques, or define heuristic reactive controllers to achieve the task of walking ~\citep{miura1984dynamic,raibert1984hopping,geyer2003positive,yin2007simbicon,sreenath2011compliant,johnson2012tail,khoramshahi2013piecewise,ames2014rapidly,hyun2016implementation,barragan2018minirhex, bledt2018cheetah, hutter2016anymal, imai2021vision}.This method has led to several promising results in the real world, although they still lack the generality needed to deploy them in the real world. This has motivated work in using RL for learning to walk in simulation~\citep{schulman2017proximal,lillicrap2016continuous,mnih2016asynchronous,fujimoto2018addressing}, and then successfully deploy them in a diverse set of real world scenarios 
~\citep{tan2018sim,tobin2017domain,peng2018sim,xie2020dynamics,nachum2020multi, hwangbo2019learning,tan2018sim,hanna2017grounded}. Alternatively, a policy learned in simulation can be adapted at test-time to work well in real environments \citep{yu2017preparing, yu2018policy, peng2020learning, zhou2019environment, yu2019biped, yu2020learning, song2020rapidly, clavera2018learning, rma, fu2021minimizing, smith2021legged, yang2022learning}. However, most of these methods are blind, and only use proprioceptive signal to walk. 

\vspace{-0.1in}
\paragraph{Locomotion from Elevation Maps}
To achieve visual control of walking, classical methods decouple the perception and control aspects, assuming a perfect output from perception, such as an elevation map, and then using it for planning and control~\citep{fankhauser2018probabilistic, 8961816, kweon1989terrain, kweon1992high, kleiner2007real}. The control part can be further decoupled into searching for feasible footholds on the elevation map and then execute it with a low-level policy~\citet{chestnutt2007navigation}. The foothold feasibility scores can either be estimated heuristically~\citep{wermelinger2016navigation, chilian2009stereo, jenelten2020perceptive, mastalli2015line, fankhauser2018robust, kim2020vision, agrawal2021vision} or learned~\citep{kolter2008control, kalakrishnan2009learning, wellhausen2021rough, mastalli2017trajectory, magana2019fast}. Other methods forgo explicit foothold optimization and learn traversibility maps instead~\citep{yang2021real, chavez2018learning, guzzi2020path, gangapurwala2021real}. Recent methods skip foothold planning and directly train a deep RL policy that takes the elevation map as input and outputs either low-level motor primitives \citep{miki2022learning, tsounis2020deepgait} or raw joint angles \citep{rudin2022learning, peng2016terrain, peng2017deeploco, xie2020allsteps}. Elevation maps can be noisy or incorrect and dealing with imperfect maps is a major challenge to building robust locomotion systems. Solutions to this include incorporating uncertainty in the elevation map \citep{fankhauser2018probabilistic, 6491052, fankhauser2014robot} and simulating errors at training time to make the walking policy robust to them \citep{miki2022learning}. 

\vspace{-0.1in}
\paragraph{Locomotion from Egocentric Depth}
Closest to ours is the line of work that doesn't construct explicit elevation maps and predicts actions directly from depth. ~\citep{yang2022learning} learn a policy for obstacle avoidance from depth on flat terrain, ~\citep{jain2020pixels} train a hierarchical policy which uses depth to traverse curved cliffs and mazes in simulation, ~\citep{escontrela2020zero} use lidar scans to show zero-shot generalization to difficult terrains. 
~\citet{yu2021visual} train a policy to step over gaps by predicting high-level actions using depth from the head and below the torso. Relatedly, ~\citet{margolis2021learning} train a high-level policy to jump over gaps from egocentric depth using a whole body impulse controller. In contrast, we directly predict target joint angles from egocentric depth without constructing metric elevation maps.

\section{Discussion and Limitations} 
\label{sec:conclusion}
In this work, we show an end-to-end approach to walking with egocentric depth that can traverse a large variety of terrains including stairs, gaps and stepping stones. However, there can be certain instances where the robot fails because of a visual or terrain mismatch between the simulation and the real world. The only solution to this problem under the current paradigm is to engineer the situation back into simulation and retrain. This poses a fundamental limitation to this approach and in future, we would like to leverage the data collected in the real world to continue improving both the visual and the motor performance.

\clearpage
\acknowledgments{We would like to thank Kenny Shaw and Xuxin Cheng for help with hardware. Shivam Duggal, Kenny Shaw, Xuxin Cheng, Shikhar Bahl, Zipeng Fu, Ellis Brown helped with recording videos. We also thank Alex Li for proofreading. The project was supported in part by the DARPA Machine Commonsense Program and ONR N00014-22-1-2096.}
\bibliography{main}
\clearpage

\appendix
\section{Proof of Theorem 3.1}\label{sec:proof}
\begin{thm*}
$\mathcal{M} = \left(\mathcal{S}, \mathcal{A}, P, R, \gamma\right)$ be an MDP with state space $\mathcal{S}$, action space $\mathcal{A}$, transition function $P:\mathcal{S}\times\mathcal{A}\rightarrow\mathcal{S}$, reward function $R:\mathcal{A}\times\mathcal{S}\rightarrow\mathbb{R}$ and discount factor $\gamma$. Let $V^1(s)$ be the value function of the phase 1 policy that is trained to be close to optimal value function $V^*(s)$, i.e., $\left|V^\ast(s) - V^1(s)\right| < \epsilon$ for all $s\in\mathcal{S}$, and $\pi^1(s)$ be the greedy phase 1 policy obtained from $V^1(s)$. Suppose the phase 2 policy operates in a different state space $\mathcal{S}'$ given by an invertible mapping $f:\mathcal{S}\rightarrow\mathcal{S}'$ . If the phase 2 policy is close to phase 1 $\left|\pi^1(s) - \pi^2(f(s))\right| < \eta \ \forall \ s$ and $R,P$ are Lipschitz continuous, then the return of phase 2 policy is close to optimal everywhere, i.e., for all $s$,
$$\left|V^\ast(s) - V^{\pi^2}(f(s))\right| < \frac{2\epsilon\gamma + \eta c}{1-\gamma}$$
where $c \propto \sum_{s\in\mathcal{S}}V^\ast(s)$ is a large but bounded constant.
\end{thm*}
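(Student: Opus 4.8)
The plan is to bound $|V^\ast(s) - V^{\pi^2}(f(s))|$ by routing through the value $V^{\pi^1}(s)$ of the greedy phase-1 policy and applying the triangle inequality, so that the two resulting pieces account respectively for the $2\epsilon\gamma$ and $\eta c$ contributions to the numerator. The first preliminary step is to exploit invertibility of $f$: since $f$ is a bijection, the phase-2 MDP on $\mathcal{S}'$ is isomorphic to $\mathcal{M}$ via the relabelling $s\mapsto f(s)$ (transitions and rewards pulled back through $f$ and $f^{-1}$), so $V^{\pi^2}(f(s)) = V^{\tilde\pi^2}(s)$, where $\tilde\pi^2 := \pi^2\circ f$ is an ordinary policy on $\mathcal{S}$ satisfying $|\pi^1(s) - \tilde\pi^2(s)| < \eta$ for all $s$. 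This reduces everything to a statement about two nearby policies on the original MDP.

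For the first piece, $|V^\ast(s) - V^{\pi^1}(s)|$, I would invoke the classical bound on the suboptimality of a greedy policy derived from an approximate value function: since $\pi^1$ is greedy with respect to $V^1$ one has $T^{\pi^1}V^1 = T^\ast V^1$ pointwise, and combining this with the $\gamma$-contraction of the Bellman operators $T^\ast$ and $T^{\pi^1}$ together with $\|V^1 - V^\ast\|_\infty < \epsilon$ yields $\|V^\ast - V^{\pi^1}\|_\infty \le \frac{2\gamma\epsilon}{1-\gamma}$. This is a short, standard computation and is where the $2\epsilon\gamma$ term originates.

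For the second piece, $|V^{\pi^1}(s) - V^{\tilde\pi^2}(s)|$, I would run a policy-perturbation (simulation-lemma style) argument. Writing $V^{\pi^1}$ and $V^{\tilde\pi^2}$ as the fixed points of $T^{\pi^1}$ and $T^{\tilde\pi^2}$ and inserting $T^{\pi^1}V^{\tilde\pi^2}$ gives
$$\|V^{\pi^1} - V^{\tilde\pi^2}\|_\infty \le \gamma\|V^{\pi^1} - V^{\tilde\pi^2}\|_\infty + \sup_{s}\bigl|(T^{\pi^1}V^{\tilde\pi^2})(s) - (T^{\tilde\pi^2}V^{\tilde\pi^2})(s)\bigr|,$$
so it suffices to bound the last supremum. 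Each term there splits into a reward difference $|R(\pi^1(s),s) - R(\tilde\pi^2(s),s)| \le L_R\eta$ controlled by Lipschitzness of $R$, and a transition term $\gamma\bigl|V^{\tilde\pi^2}(P(s,\pi^1(s))) - V^{\tilde\pi^2}(P(s,\tilde\pi^2(s)))\bigr|$; Lipschitzness of $P$ says the two successor states (or successor distributions) are $O(L_P\eta)$-close, and bounding the value gap between them by a global magnitude bound on $V^{\tilde\pi^2}$ — which, up to the $O(\epsilon)$ slack from the first piece, is of order $\sum_{s}V^\ast(s)$ — produces the constant $c\propto\sum_s V^\ast(s)$. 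Rearranging gives $\|V^{\pi^1} - V^{\tilde\pi^2}\|_\infty < \frac{\eta c}{1-\gamma}$, and adding the two pieces yields the claimed bound $\frac{2\epsilon\gamma + \eta c}{1-\gamma}$.

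I expect the main obstacle to be precisely the transition term in the third step: since $V^{\tilde\pi^2}$ is not assumed Lipschitz, "nearby successor states" cannot be converted into "nearby values", so one is forced into a crude global bound on $\|V^{\tilde\pi^2}\|$ — which is exactly why $c$ is described as large but bounded and scaling with $\sum_s V^\ast(s)$. Subsidiary care is needed to make $\|V^{\tilde\pi^2}\|$ comparable to $\|V^\ast\|$ (they agree up to the $O(\epsilon)$ terms above), to fix the precise meaning of "Lipschitz $P$" (e.g. Lipschitz in the total-variation distance of the successor distribution) so that the successor-value gap is well defined, and to absorb $L_R$, $L_P$ and $\gamma$ into $c$; these choices affect only the explicit form of $c$, not the structure of the final bound.
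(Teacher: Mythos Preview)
Your approach is correct in spirit but structurally different from the paper's. The paper does \emph{not} triangulate through $V^{\pi^1}$; instead it works directly with $V^\ast(s_0)-V^{\pi^S}(s_0)$ at the maximizing state $s_0$, expands both value functions via their Bellman equations, and uses the greedy property of $\pi^1$ with respect to $Q^T$ (the $Q$-function of $V^1$) to produce the intermediate inequality $R(s_0,a^\ast)-R(s_0,a^T)\le \gamma\sum_s[P(s\mid s_0,a^T)-P(s\mid s_0,a^\ast)]V^\ast(s)+2\epsilon\gamma$. Lipschitzness of $R$ and $P$ then moves $a^T$ to $a^S$, the self-referential term $\gamma(V^\ast(s_0)-V^{\pi^S}(s_0))$ appears because $s_0$ is the argmax, and rearranging yields the bound in one shot with $c=L_R+\gamma L_P\sum_s V^\ast(s)$. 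Your decomposition into the classical greedy bound $\|V^\ast-V^{\pi^1}\|_\infty\le\frac{2\gamma\epsilon}{1-\gamma}$ plus a simulation-lemma bound $\|V^{\pi^1}-V^{\tilde\pi^2}\|_\infty\le\frac{\eta c}{1-\gamma}$ is more modular and arguably cleaner, since each piece is a standard result; the paper's single-chain argument is shorter but intertwines the two effects and makes the provenance of each term less transparent.

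One small slip to watch: in your second piece you insert $T^{\pi^1}V^{\tilde\pi^2}$, so the transition term carries $V^{\tilde\pi^2}$, and you then assert this is within $O(\epsilon)$ of $V^\ast$ ``from the first piece''. That is not quite right---the first piece controls $\|V^\ast-V^{\pi^1}\|$, not $\|V^\ast-V^{\tilde\pi^2}\|$, so this step is circular as written. The easy fix is to insert $T^{\tilde\pi^2}V^{\pi^1}$ instead, so the transition term carries $V^{\pi^1}$, which you have already shown is $O(\epsilon/(1-\gamma))$-close to $V^\ast$; then $\sum_s|V^{\pi^1}(s)|$ is genuinely comparable to $\sum_s V^\ast(s)$ and the constant $c$ comes out as stated. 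With that adjustment your route goes through and recovers the same bound.
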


\begin{proof}
Since the function $f$ maps the state spaces $\mathcal{S}, \mathcal{S}'$, we can assume for convenience that both policies $\pi^1, \pi^2$ operate in the same state space $\mathcal{S}$. To obtain an action for $s'\in \mathcal{S}'$, we can simply query $\pi^2(f^{-1}(s'))$. Assume that the reward function $R$ and transition function $P$ are Lipschitz
\begin{align}
    \left|R(s_t, a_t) - R(s_t, a'_t)\right| &\leqslant L_R\left|a_t - a'_t\right| \label{eq:lr}\\
    \left|P\left(s_{t+1}\mid s_t, a_t\right) - P\left(s_{t+1}\mid s_t, a'_t\right)\right| &\leqslant L_P\left|a_t - a'_t\right| \label{eq:lp}
\end{align}
for all states $s_t, s_{t+1}$ and actions $a_t, a'_t$.
We generalize the structure of proof for the upper bound on the distance in approximate optimal-value functions~\citep{singh1994upper,bertsekas1987dynamic} to the teacher-student setting. Let $s_0$ be the point where the distance between $V^\ast$ and $V^{\pi^S}$ is maximal
\begin{equation}
    s_0 = \mathop{\textrm{argmax}}_{s} \ V^\ast(s) - V^{\pi^S}(s)
\end{equation}
Let $Q^T$ be the Q-function corresponding to $V^T$. $\pi^T$ is obtained by maximizing $Q^T(s, a)$ over actions $\pi^T(s) = \mathop{\textrm{argmax}}_a Q^T(s, a)$. Note that in general the value function $V^{\pi^T}$ may be different from $V^T$. Let $a^\ast$ be the action taken by the optimal policy at state $s_0$, while $a^T = \pi^T(s_0)$ be the action taken by the teacher. Then, the return of the teacher's greedy action $a^T$ must be highest under teacher's value function $Q^T$,
\begin{equation}
    Q^T(s_0, a^\ast) \leqslant Q^T(s_0, a^T) \label{eq:greedy_q}
\end{equation}
We can expand each side of \eqref{eq:greedy_q} above to get
\begin{equation}
    R(s_0, a^\ast) + \gamma\sum_{s\in\mathcal{S}}P(s\mid s_0, a^\ast)V^T(s) \leqslant R(s_0, a^T) + \gamma\sum_{s\in\mathcal{S}}P(s\mid s_0, a^T)V^T(s)
\end{equation}
Notice that $\left|V^\ast(s) - V^{T}(s)\right| < \epsilon$ implies that $V^T(s) \in \left[V^\ast(s) - \epsilon, V^\ast(s) + \epsilon\right]$ which we can plug into the inequality above to get
\begin{align}
    & R(s_0, a^\ast) - R(s_0, a^T) \nonumber \\
    &\leqslant  \gamma\sum_{s\in\mathcal{S}}\left[P(s\mid s_0, a^T)V^\ast(s) - P(s\mid s_0, a^\ast)V^\ast(s) +  \epsilon\left(P(s\mid s_0, a^T) + P(s\mid s_0, a^\ast)\right)\right] \nonumber \\
    &\leqslant \gamma\sum_{s\in\mathcal{S}}\left[P(s\mid s_0, a^T)V^\ast(s) - P(s\mid s_0, a^\ast)V^\ast(s)\right] + 2\epsilon\gamma \label{eq:b1}
\end{align}
We can now write a bound for $V^\ast(s_0) - V^{\pi^S}(s_0)$. Let $a^S$ be the action taken by the student policy at state $s_0$. Then,
\begin{align*}
    & V^\ast(s_0) - V^{\pi^S}(s_0) \\
    &= R(s_0, a^\ast) - R(s_0, a^{S}) + \gamma\sum_{s\in\mathcal{S}}P(s\mid s_0, a^\ast)V^\ast(s) - P(s\mid s_0, a^S)V^{\pi^S}(s) \qquad\quad \textrm{(substitute (\ref{eq:lr}))}\\
   &\leqslant R(s_0, a^\ast) - R(s_0, a^T) + L_R\left|a^S - a^T\right| + \gamma\sum_{s\in\mathcal{S}}P(s\mid s_0, a^\ast)V^\ast(s) - P(s\mid s_0, a^S)V^{\pi^S}(s) \\
   &\leqslant R(s_0, a^\ast) - R(s_0, a^T) + L_R\eta + \gamma\sum_{s\in\mathcal{S}}P(s\mid s_0, a^\ast)V^\ast(s) - P(s\mid s_0, a^S)V^{\pi^S}(s)
\end{align*}
We can now use (\ref{eq:b1}) to write
\begin{align*}
     & V^\ast(s_0) - V^{\pi^S}(s_0) \\
    &\leqslant 2\epsilon\gamma + L_R\eta + \gamma\sum_{s\in\mathcal{S}} P(s\mid s_0, a^T)V^\ast(s) - P(s\mid s_0, a^S)V^{\pi^S}(s) \qquad\qquad \textrm{(substitute (\ref{eq:lp}))}\\
    &\leqslant 2\epsilon\gamma + L_R\eta + \gamma\sum_{s\in\mathcal{S}} P(s\mid s_0, a^S)(V^\ast(s) - V^{\pi^S}(s)) + \eta\gamma L_P \sum_{s\in S}V^\ast(s)\\
    &=2\epsilon\gamma + L_R\eta + \gamma(V^\ast(s) - V^{\pi^S}(s)) + \eta\gamma L_P \sum_{s\in S}V^\ast(s)\\
    &\big(\textrm{since $s_0$ is the state with highest difference between $V^\ast$ and $V^{\pi^S}$}\big)\\
    &\leqslant 2\epsilon\gamma + L_R\eta + \gamma(V^\ast(s_0) - V^{\pi^S}(s_0)) + \eta\gamma L_P \sum_{s\in S}V^\ast(s)\\
\end{align*}
Rearranging yields,
\begin{align*}
    V^\ast(s_0) - V^{\pi^S}(s_0) \leqslant \frac{2\epsilon\gamma + \eta\left(L_R + \gamma L_P\sum_{s\in\mathcal{S}}V^\ast(s)\right)}{1-\gamma}
\end{align*}
Since $s_0$ is the state at which the difference between $V^\ast$ and $V^{\pi^S}$ is maximal, we can claim
\begin{align*}
    V^\ast(s) - V^{\pi^S}(s) \leqslant \frac{2\epsilon\gamma + \eta c}{1-\gamma}
\end{align*}
for all states $s\in\mathcal{S}$, where $c=\left(L_R + \gamma L_P\sum_{s\in\mathcal{S}}V^\ast(s)\right)$
\end{proof}

\section{Rewards} \label{sec:rewards} Previous work \citep{rma, fu2021coupling} has shown that task agnostic energy minimization based rewards can lead to the emergence of stable and natural gaits that obey high-level commands. We use this same basic reward structure along with penalties to prevent behavior that can damage the robot on complex terrain. Now onwards, we omit the time subscript $t$ for simplicity.
\begin{itemize}[noitemsep,leftmargin=1.3em,itemsep=0em,topsep=0em]
    \item \emph{Absolute work penalty} $-|\boldsymbol{\tau}\cdot\mathbf{q}|$ where $\boldsymbol{\tau}$ are the joint torques. We use the absolute value so that the policy does not learn to get positive reward by exploiting inaccuracies in contact simulation.
    \item \emph{Command tracking} $v_x^\textrm{cmd} - \left|v_x^\textrm{cmd} - v_x\right| - |\omega_z^\textrm{cmd} - \omega_z|$ where $v_x$ is velocity of robot in forward direction and $\omega_z$ is yaw angular velocity ($x, z$ are coordinate axes fixed to the robot).
    \item \emph{Foot jerk penalty} $\sum_{i\in\mathcal{F}}\|\mathbf{f}^i_t - \mathbf{f}^i_{t-1}\|$ where $\mathbf{f}_t^i$ is the force at time $t$ on the $i^\textrm{th}$ rigid body and $\mathcal{F}$ is the set of feet indices. This prevents large motor backlash.
    \item \emph{Feet drag penalty} $\sum_{i\in\mathcal{F}}\mathbb{I}\left[f^i_z \geq 1\textrm{N}\right]\cdot \left(\left|v_x^i\right| + \left|v_y^i\right|\right)$ where $\mathbb{I}$ is the indicator function, and $v_x^i, v_y^i$ is velocity of $i^\textrm{th}$ rigid body. This penalizes velocity of feet in the horizontal plane if in contact with the ground preventing feet dragging on the ground which can damage them.
    \item \emph{Collision penalty} $\sum_{i\in\mathcal{C}\cup\mathcal{T}}\mathbb{I}\left[\mathbf{f}^i\geq 0.1\textrm{N}\right]$ where $\mathcal{C}, \mathcal{T}$ are the set of calf and thigh indices. This penalizes contacts at the thighs and calves of the robot which would otherwise graze against edges of stairs and discrete obstacles.
    \item \emph{Survival bonus} constant value $1$ at each time step to prioritize survival over following commands in challenging situations.
\end{itemize}
The scales for these are $-1\mathrm{e}{-4}, 7, -1\mathrm{e}{-4}, -1\mathrm{e}{-4}, -1, 1$. Notice that these reward functions do not define any gait priors and the optimal gait is allowed emerge via RL. Target heading values $h_t^\textrm{cmd}$ are sampled and commanded angular velocities is computed as $\left(\omega_z^{\textrm{cmd}}\right)_t = 0.5 \cdot \left(h_t^\textrm{cmd} - h_t\right)$ where $h_t$ is the current heading value. When walking on terrain,  $\left(v_x^{\textrm{cmd}}\right)_t = 0.35\textrm{m/s}$ and heading is varied in $h_t^\textrm{cmd} \in\left[-10\degree, 10\degree\right]$. On flat ground, one of three sampling modes are chosen uniformly at random - curve following, in-place turning and complete stop. In the curve following regime $\left(v_x^{\textrm{cmd}}\right)_t \in [0.2\textrm{m/s}, 0.75\textrm{m/s}]$ while $h_t^\textrm{cmd} \in [-60\degree, 60\degree]$. For in-place turning, $\left(v_x^{\textrm{cmd}}\right)_t = 0$ and $h_t^\textrm{cmd} \in [-180\degree, 180\degree]$. In complete stop, $\left(v_x^{\textrm{cmd}}\right)_t = 0, h_t^\textrm{cmd} = 0$. This scheme is designed to mimic the distribution of commands the robot sees during operation. We terminate if the pitch exceeds $90\degree$ or if the base or head of the robot collides with an object.

\begin{algorithm}
\caption{Pytorch-style pseudo-code for phase 2}
\begin{algorithmic}
\Require Phase 1 policy $\pi^1 = (G^1, F^1, \beta)$, parallel environments $E$, max iterations $M$, truncated timesteps $T$, learning rate $\eta$
\State Initialize phase 2 policy $\pi^2 = (G^2, F^2, \gamma)$ with $G^2\gets G^1$, $F^2\gets F^1$.
\State $n\gets 0$
\While{$n\neq M$}
\State Loss $l\gets 0$
\State $t\gets 0$
\While{$t \neq T$}
\State $s\gets E.observations$
\State $a^1 \gets \pi^1(s)$
\State $a^2\gets \pi^2(s)$
\State $l\gets l + \|a^1-a^2\|^2_2$
\State $E.step\left(a^2\right)$
\State $t\gets t + 1$
\EndWhile
\State $\Theta_{\pi^2} \gets \Theta_{\pi^2} - \eta\nabla_{\Theta_{\pi^2}}l$
\State $\pi^2 \gets \pi^2.detach()$
\State $n\gets n + 1$
\EndWhile
\end{algorithmic}
\label{alg:phase2}
\end{algorithm}

\section{Experimental Setup and Implementation Details}
\subsection{Pseudo-code} Phase 1 is simply reinforcement learning using policy gradients. We describe the pseudo-code for the phase 2 training in Algorithm~\ref{alg:phase2}.

\subsection{Hardware} \label{sec:hardware}
We use the Unitree A1 robot pictured in Figure 2 of the main paper. The robot has 12 actuated joints, 3 per leg at hip, thigh and calf joints. The robot has a front-facing Intel RealSense depth camera in its head. TThe compute consists of a small GPU (Jetson NX) capable of $\approx$0.8 TFLOPS and an UPboard with Intel Quad Core Atom X5-8350 containing 4GB ram and 1.92GHz clock speed. The UPboard and Jetson are on the same local network. Since depth processing is an expensive operation we run the convolutional backbone on the Jetson's GPU and send the depth latent over a UDP socket to the UPboard which runs the base policy. The policy operates at $50\textrm{Hz}$ and sends joint position commands which are converted to torques by a low-level PD controller running at $400\textrm{Hz}$ with stiffness $K_p = 40$ and damping $K_d = 0.5$.

\begin{table}[]
    \centering
    \begin{tabular}{cccc}
        \toprule
         Observation & $a$ & $b$ & $\sigma$ \\
         \midrule
         Joint angles (left hips) & 1.0 & 0.1 & 0.01 \\
         Joint angles (right hips) & 1.0 & -0.1 & 0.01 \\
         Joint angles (front thighs) & 1.0 & 0.8 &  0.01 \\
         Joint angles (rear thighs) & 1.0 & 1.0 & 0.01 \\
         Joint angles (calves) & 1.0 & -1.5 & 0.01 \\
         Joint velocity & 0.05 & 0.0 & 0.05 \\
         Angular velocity & 0.25 & 0.0 & 0.05 \\
         Orientation & 1.0 & 0.0 & 0.02 \\
         Scandots height & 5.0 & 0.0 & 0.07 \\
         Scandots horizontal location & -- & -- & 0.01 \\
         \bottomrule
    \end{tabular}
    \caption{During training, ground truth observations $\mathbf{o}_t$ are shifted , normalized and noised to get observations $\mathbf{o'}_t$ which are passed to the policy. $\mathbf{o'}_t = a(\mathbf{o}_t - b) + \epsilon$ where $\epsilon \sim \mathcal{N}(0, \sigma)$. We tabulate $a, b, \sigma$ for each kind of observation above. $a, b$ values for scandots horizontal locations are blank since these locations are fixed with respect to the robot and not passed to the policy.}
    \label{tab:obs_noise}
\end{table}

\begin{figure}
\begin{floatrow}
\floatbox{figure}[.25\textwidth][\FBheight][t]
{\caption{\small Set of terrain we use during training}
 \label{fig:terrain}}
{\includegraphics[width=0.29\textwidth]{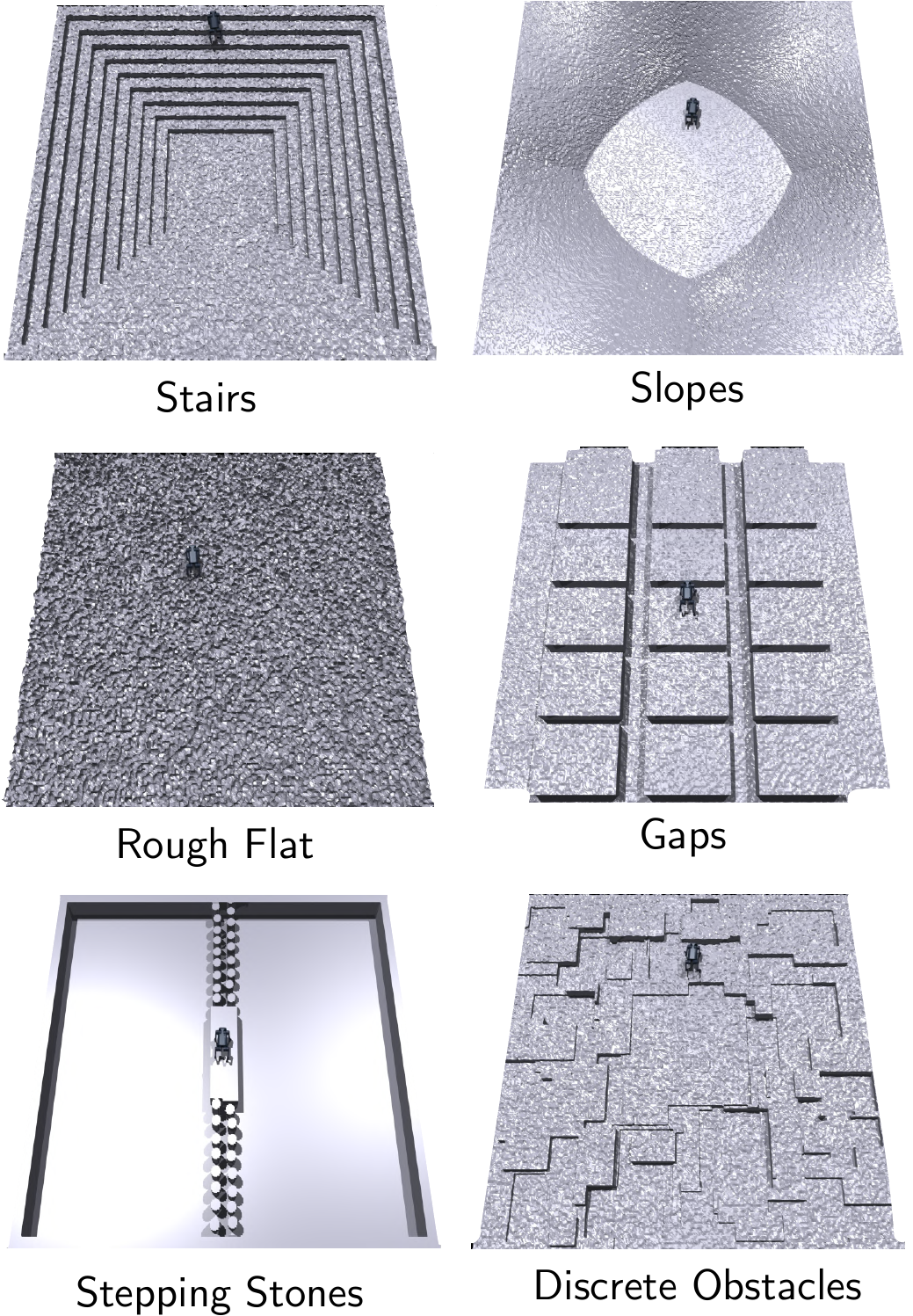}}\hspace*{1cm}
\floatbox{table}[.65\textwidth][\FBheight][t]
{\caption{\small Parameter randomization in simulation. * indicates that randomization is increased to this value over a curriculum.}
 \label{tab:dr}}
{\begin{tabular}{cc}
  \toprule
  Name & Range \\
\midrule
Height map update frequency* & $[80\textrm{ms}, 120\textrm{ms}]$  \\
Height map update latency* & $[10\textrm{ms}, 30\textrm{ms}]$ \\
Added mass & $[-2\textrm{kg}, 6\textrm{kg}]$ \\
Change in position of COM & $[-0.15\textrm{m}, 0.15\textrm{m}]$ \\
Random pushes & Every $15\textrm{s}$ at $0.3\textrm{m/s}$ \\
Friction coefficient & $[0.3, 1.25]$ \\
Height of fractal terrain & $[0.02\textrm{m}, 0.04\textrm{m}]$ \\
Motor Strength & $[90\%, 110\%]$ \\
PD controller stiffness & $[35, 45]$ \\
PD controller damping & $[0.4, 0.6]$ \\
  \bottomrule
  \end{tabular}}
\end{floatrow}
\vspace{-4mm}
\end{figure}

\paragraph{Simulation Setup}
We use the IsaacGym (IG) simulator with the legged\_gym library \citep{rudin2022learning} to develop walking policies. IG can run physics simulation on the GPU and has a throughput of around $2\textrm{e}5$ time-steps per second on a Nvidia RTX 3090 during phase 1 training with $4096$ robots running in parallel. For phase 2, we can render depth using simulated cameras calibrated to be in the same position as the real camera on the robot. Since depth rendering is expensive and memory intensive, we get a throughput of 500 time-steps per second with $256$ parallel environments. We run phase 1 for $15$ billion samples ($13$ hours) and phase 2 for $6$ million samples ($6$ hours).

\begin{figure}
    \centering
    \includegraphics[width=0.6\textwidth]{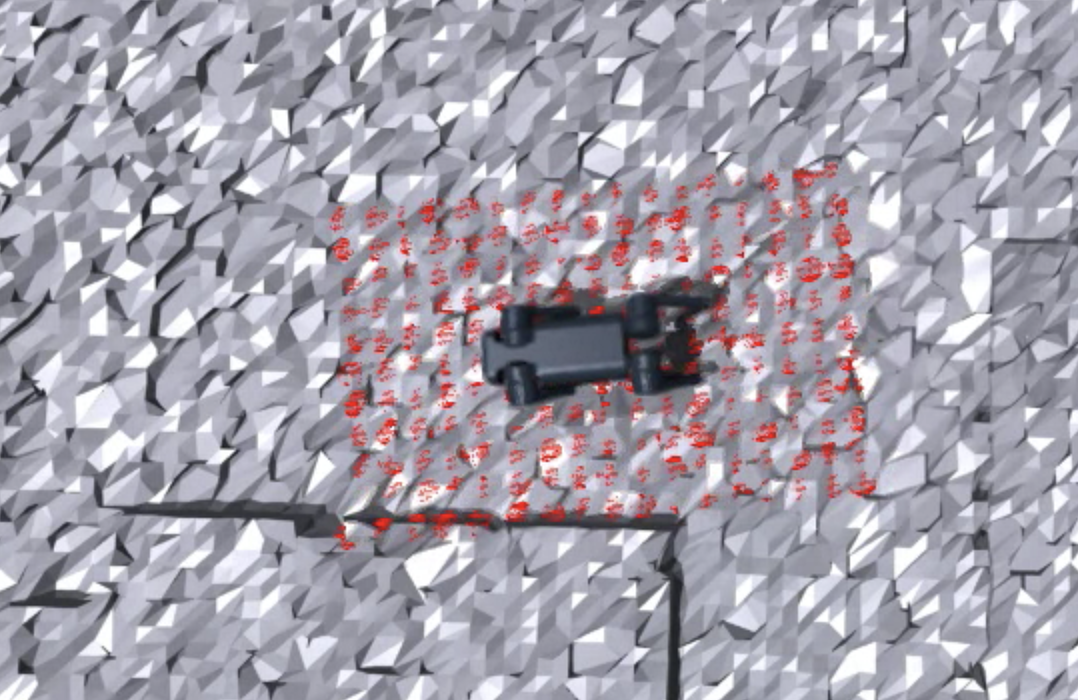}
    \caption{The privileged baseline receives terrain information from all around the robot including from around the hind feet.}
    \label{fig:terrain_baseline}
\end{figure}

\paragraph{Environment}
We construct a large elevation map with 100 sub-terrains arranged in a $20\times10$ grid. Each row has the same type of terrain arranged in increasing difficulty while different rows have different terrain. Each terrain has a length and width of $8\textrm{m}$. We add high fractals (upto $10\textrm{cm}$) on flat terrain while medium fractals ($4\textrm{cm}$) on others. Terrains are shown in Figure~\ref{fig:terrain} with randomization ranges described in Table~\ref{tab:dr}.

\paragraph{Policy architecture}
The elevation map compression module $\beta$ consists of an MLP with $2$ hidden layers. The GRUs $G^1, G^2$ are single layer while the feed-forward networks $F^1, F^2$ have two hidden layers with $\mathrm{ReLU}$ non-linearities. The convolutional depth backbone $\gamma$ consists of a series of 2D convolutions and max-pool layers.

\end{document}